\theoremstyle{definition}
\newtheorem{theorem}{Theorem}
\newtheorem{lemma}{Lemma}
\title{Communication Efficient Parallel Algorithms for Optimization on Manifolds}
\author{
Bayan Saparbayeva\\
Department of Applied and Computational Mathematics and Statistics \\
Univeristy of Notre Dame \\
Notre Dame, Indiana 46556, USA \\ 
\texttt{bsaparba@nd.edu } \\
\\
Michael Minyi Zhang \\
Department of Computer Science\\
Princeton University\\
Princeton, New Jersey 08540, USA \\
\texttt{mz8@cs.princeton.edu} \\
\\
Lizhen Lin \\
Department of Applied and Computational Mathematics and Statistics \\
Univeristy of Notre Dame \\
Notre Dame, Indiana 46556, USA \\ 
\texttt{lizhen.lin@nd.edu} \\
}
\date{}
\begin{document}

\maketitle

\begin{abstract}
The last decade has witnessed an explosion in the development of models, theory and computational algorithms for ``big data'' analysis. In particular, distributed computing has served as a natural and dominating paradigm for statistical inference. However, the existing literature on parallel inference almost exclusively focuses on Euclidean data and parameters. While this assumption is valid for many applications, it is increasingly more common to encounter problems where the data or the parameters lie on a non-Euclidean space, like a manifold for example. Our work aims to fill a critical gap in the literature by generalizing parallel inference algorithms to optimization on manifolds. We show that our proposed algorithm is both communication efficient and carries theoretical convergence guarantees. In addition, we demonstrate the performance of our algorithm to the estimation of Fr\'echet means on simulated spherical data and the low-rank matrix completion problem over Grassmann manifolds applied to the Netflix prize data set.
\end{abstract}

\section{Introduction}
\label{sec-intro}
A natural representation for many statistical and machine learning problems is to assume the parameter of interest lies on a more general space than the Euclidean space. Typical examples of this situation include diffusion matrices in large scale diffusion tensor imaging (DTI)  which are $3\times3$ positive definite matrices, now commonly used in neuroimaging for clinical trials  \cite{dti-ref}. In computer vision, images are often preprocessed or reduced to a collection of subspaces \cite{subspacepaper,facialsubpace} or, a digital image can also be represented by a set of $k$-landmarks, forming landmark based shapes \cite{kendall84}.  One may also encounter data that are stored as orthonormal frames \cite{vecdata}, surfaces\cite{surface}, curves\cite{curve}, and networks \cite{paperwitqeric}.

In addition, parallel inference has become popular in overcoming the computational burden arising from the storage, processing and computation of big data,  resulting in a vast literature in statistics and machine learning dedicated to this topic. The general scheme in the frequentist setting is to divide the data into subsets, obtain estimates from each subset which are combined to form an ultimate estimate for inference \cite{ Duchi2012DualAF, JMLR:v14:zhang13b, Lee2015Communication}. In the Bayesian setting, the subset posterior distributions are first obtained in the dividing step, and these subset posterior measures or the MCMC samples from each subset posterior are then combined for final inference \cite{median-posterior, median-selection, NIPS2015_5986, Neiswanger:2014, consensus-mcmc,nemeth2018merging}. Most of these methods are ``embarrassingly parallel'' which often do not require communication across different machines or subsets. Some communication efficient algorithms have also been proposed with prominent methods including \cite{JordanLeeYang} and \cite{Shamir}.
   
Despite tremendous advancement in parallel inference, previous work largely focuses only on Euclidean data and parameter spaces. To better address challenges arising from inference of big non-Euclidean data or data with non-Euclidean parameters, there is a crucial need for developing valid and efficient inference methods including parallel or distributed inference and algorithms that can appropriately incorporate the underlying geometric structure. 

For a majority of applications, the parameter spaces fall into the general category of \emph{manifolds}, whose geometry is well-characterized. Although there is a recent literature on inference of manifold-valued data including methods based on Fr\'echet means or model based methods \cite{linclt, rabivic03,rabivic05, rabibook, sinicapaper} and even scalable methods for certain models \cite{recht2013parallel, mackey2015distributed,salehian2015efficient}, there is still a vital lack of general parallel algorithms on manifolds. We aim to fill this critical gap by introducing our parallel inference strategy. The novelty of our paper is in the fact that is generalizable to a wide range of loss functions for manifold optimization problems and that we can parallelize the algorithm by splitting the data across processors. Furthermore, our theoretical development does not rely on previous results. In fact, generalizing Theorem 1 to the manifold setting requires totally different machineries from that of previous work. 

Notably, our parallel optimization algorithm has several key features: 
\begin{enumerate}[label=(\arabic*)]
\item Our parallel algorithm efficiently exploits the geometric information of the data or parameters.
\item The algorithm minimizes expensive inter-processor communication.
\item The algorithm has theoretical guarantees in approximating the true optimizer, characterized in terms of convergence rates.
\item The algorithm has outstanding practical performance in simulation studies and real data examples. 
\end{enumerate} 

Our paper is organized as follows: In Section~\ref{sec:related} we introduce related work to the topic of parallel inference. Next we present our proposed parallel optimization framework in Section~\ref{sec-manifold} and present theoretical convergence results for our parallel algorithm in Section~\ref{sec:theory}. In Section~\ref{sec-simu}, we consider a simulation study of estimating the Fr\'echet means on the spheres and a real data example using the Netflix prize data set. The paper ends with a conclusion and discussion of future work in Section~\ref{sec:conclusion}. 

\section{Related work}
\label{sec:related}
In the typical ``big data'' scenario, it is usually the case that the entire data set cannot fit onto one machine. Hence, parallel inference algorithms with provably good theoretic convergence properties are crucial for this situation. In such a setting, we assume that we have $N=mn$ identically distributed observations $    \{x_{ij}: i=1, \ldots, n, j=1, \ldots, m\}$, which are i.i.d divided into $m$ subsets $ X_j=\{x_{ij}, i=1, ..., n\}, j=1, \ldots, m $ and stored in $m$ separate machines. While it is important to consider inference problems when the data are not i.i.d. distributed across processors, we will only consider the i.i.d. setting as a simplifying assumption for the theory. 

For a loss function $\mathcal{L}:\Theta\times\mathcal{D}\rightarrow\mathbb{R},$ each machine $ j $ has access to a local loss function, $\mathcal{L}_j(\theta) = \frac{1}{n} \sum_{i=1}^n\mathcal{L}(\theta, x_{ij})$, where $\mathcal{D}$ is the data space. Then, the local loss functions are combined into a global loss function $ \mathcal{L}_N(\theta)=\frac{1}{m}\sum_{j=1}^m\mathcal{L}_j(\theta)$. For our intended optimization routine, we are actually looking for the minimizer of an expected loss function $\mathcal{L}^*(\theta)=\mathbb{E}_{x\in\mathcal{D}}\mathcal{L}(\theta, x) $. In the parallel setting, we cannot investigate $\mathcal{L}^*$ directly and we may only analyze it through $\mathcal{L}_N$.  However, calculating the total loss function directly and exactly requires excessive inter-processor communication, which carries a huge computational burden as the number of processors increase. Thus, we must approximate the true parameter $ \theta^*=\arg\min_{\theta\in\Theta}\mathcal{L}^*(\theta) $ by an empirical risk minimizer $\hat{\theta}=\arg\min_{\theta\in\Theta}\mathcal{L}_N(\theta).$ 

In this work, we focus on generalizing a particular parallel inference framework, the Iterative Local Estimation Algorithm (ILEA) \cite{JordanLeeYang}, to manifolds. This algorithm optimizes an approximate, surrogate loss function instead of the global loss function as a way to avoid processor communication. The idea of the surrogate function starts from the Taylor series expansion of $\mathcal{L}_N$
\begin{equation*}
\mathcal{L}_N\big(\bar{\theta}+t(\theta-\bar{\theta})\big)= \mathcal{L}_N(\bar{\theta})+ t\langle\nabla\mathcal{L}_N(\bar{\theta}), \theta-\bar{\theta}\rangle + \sum_{s=2}^{\infty}\frac{t^s}{s!} \nabla^s\mathcal{L}_N(\bar{\theta})(\theta-\bar{\theta})^{\otimes s}.
\end{equation*}
The global high-order derivatives $\nabla^s\mathcal{L}_N(\bar{\theta})$ $(s\geq2)$ are replaced by local high-order derivatives $\nabla^s\mathcal{L}_1(\bar{\theta}) (s\geq2)$ from the first machine 
\begin{equation*}
\tilde{\mathcal{L}}(\theta)= \mathcal{L}_N(\bar{\theta})+ \langle\nabla\mathcal{L}_N(\bar{\theta}), \theta-\bar{\theta}\rangle + \sum_{s=2}^{\infty}\frac{1}{s!} \nabla^s\mathcal{L}_1(\bar{\theta})(\theta-\bar{\theta})^{\otimes s}.
\end{equation*}
So the approximation error is
\begin{equation*}
\begin{split}
\tilde{\mathcal{L}}(\theta)-\mathcal{L}_N(\theta)&=\sum_{s=2}^{\infty}\frac{1}{s!}\big(\nabla^s\mathcal{L}_1(\bar{\theta})-\nabla^s\mathcal{L}_N(\bar{\theta})\big)(\theta-\bar{\theta})^{\otimes s}\\
&=\frac{1}{2}\Big\langle\theta-\bar{\theta},\big(\nabla^2\mathcal{L}_1(\bar{\theta})-\nabla^2\mathcal{L}_N(\bar{\theta})\big)\big(\theta-\bar{\theta}\big)\Big\rangle+O\big(\parallel\theta-\bar{\theta}\parallel^3\big)\\
&=O\Big( \frac{1}{\sqrt{n}} \parallel\theta-\bar{\theta}\parallel^2+\parallel\theta-\bar{\theta}\parallel^3 \Big).
\end{split}
\end{equation*}
The infinite sum $\sum_{s=2}^{\infty}\frac{1}{s!} \nabla^s\mathcal{L}_1(\bar{\theta})(\theta-\bar{\theta})^{\otimes s}$ in the $\tilde{\mathcal{L}}(\theta)$ can be replaced by $\mathcal{L}_1(\theta)-\mathcal{L}_1(\bar{\theta})-\langle\nabla\mathcal{L}_1(\bar{\theta}), \theta-\bar{\theta}\rangle$
\begin{equation*}
\tilde{\mathcal{L}}(\theta)=\mathcal{L}_1(\theta)-\big(\mathcal{L}_1(\bar{\theta})-\mathcal{L}_N(\bar{\theta})\big)-\big\langle\nabla\mathcal{L}_1(\bar{\theta})-\nabla\mathcal{L}_N(\bar{\theta}), \theta-\bar{\theta}\big\rangle.
\end{equation*}
We can omit the additive constant $\big(\mathcal{L}_1(\bar{\theta})-\mathcal{L}_N(\bar{\theta})\big)+\langle\nabla\mathcal{L}_1(\bar{\theta})-\nabla\mathcal{L}_N(\bar{\theta}), \bar{\theta}\rangle$. Thus the surrogate loss function $\tilde{\mathcal{L}}(\theta)$ is defined as
\begin{equation*}
\tilde{\mathcal{L}}(\theta)=\mathcal{L}_1(\theta)-\langle\nabla\mathcal{L}_1(\bar{\theta})-\nabla\mathcal{L}_N(\bar{\theta}), \theta\rangle.
\end{equation*}
Thus, the surrogate minimizer $\tilde{\theta}=\arg\min_{\Theta}\tilde{\mathcal{L}}$
approximates the empirical risk minimizer $\hat{\theta}$.

\cite{JordanLeeYang} show that the consequent surrogate minimizers have a provably good convergence rate to $\hat{\theta}$ given the following regularity conditions:
\begin{enumerate}
	\item The parameter space $\Theta$ is a compact and convex subset of $\mathbb{R}^d.$ Besides, $\theta^*\in{\rm int}(\Theta)$ and $R=\sup_{\theta\in\Theta}\parallel\theta-\theta^*\parallel>0,$
	\item The Hessian matrix $I(\theta)=\nabla^2\mathcal{L}^*(\theta)$ is invertible at $\theta^*,$ that is there exist constants $(\mu_{-}, \mu_{+})$ such that
	\begin{equation*}
	\mu_{-}I_d\preceq I(\theta^*)\preceq\mu_{+}I_d,
	\end{equation*}
	\item For any $\delta>0,$ there exists $\epsilon>0,$ such that
	\begin{equation*}
	\inf \ \mathbb{P}\bigg\{\inf_{\parallel\theta-\theta^*\parallel\geq\delta}\big|\mathcal{L}(\theta)-\mathcal{L}(\theta^*)\big|\geq\epsilon\bigg\}=1,
	\end{equation*}
	\item For a ball around the true parameter $U(\rho)=\{\theta: \parallel\theta-\theta^*\parallel\leq\rho\}$ there exist constants $(G, L)$ and a function $\mathcal{K}(x)$ such that
	\begin{equation*}
	\begin{split}
	\mathbb{E}\parallel\nabla\mathcal{L}(\theta)\parallel^{16}\leq G^{16} \quad \mathbb{E}\VERT\nabla^2\mathcal{L}(\theta)-I(\theta)\VERT\leq L^{16}, \\
	\VERT\mathcal{L}(\theta, x)-\mathcal{L}(\theta', x)\VERT\leq\mathcal{K}(x)\parallel\theta-\theta'\parallel,
	\end{split}
	\end{equation*}
	for all $\theta, \theta'\in U(\rho).$
\end{enumerate}
which leads to the following theorem:
\begin{theorem}
	Suppose that the standard regularity conditions hold and initial estimator $\bar{\theta}$ lies in the neighborhood $U(\rho)$ of $\theta^*.$ Then the minimizer $\tilde{\theta}$ of the surrogate loss function $\tilde{\mathcal{L}}(\theta)$ satisfies
	\begin{equation*}
	\parallel\tilde{\theta}-\hat{\theta}\parallel\leq C_2(\parallel\bar{\theta}-\hat{\theta}\parallel+\parallel\hat{\theta}-\theta^*\parallel+\VERT\nabla^2\mathcal{L}_1(\theta^*)-\nabla^2\mathcal{L}_N(\theta^*)\VERT)\parallel\bar{\theta}-\hat{\theta}\parallel,
	\end{equation*}
	with probability at least $1-C_1mn^{-8},$ where the constants $C_1$ and $C_2$ are independent of $(m, n, N).$
\end{theorem}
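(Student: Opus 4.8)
The plan is to extract an exact algebraic identity for $\tilde\theta-\hat\theta$ from the first-order optimality conditions, and then bound each factor in that identity on a single high-probability event. \textbf{Step 1 (a fixed-point identity).} On the event that $\hat\theta$ and $\tilde\theta$ lie in $\mathrm{int}(\Theta)$ we have $\nabla\mathcal{L}_N(\hat\theta)=0$ and $\nabla\tilde{\mathcal{L}}(\tilde\theta)=0$. Since $\tilde{\mathcal{L}}$ differs from $\mathcal{L}_1$ only by a linear term, $\nabla\tilde{\mathcal{L}}(\theta)=\nabla\mathcal{L}_1(\theta)-\big(\nabla\mathcal{L}_1(\bar\theta)-\nabla\mathcal{L}_N(\bar\theta)\big)$ and $\nabla^2\tilde{\mathcal{L}}\equiv\nabla^2\mathcal{L}_1$. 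Applying the fundamental theorem of calculus to $\nabla\tilde{\mathcal{L}}$ along the segment from $\hat\theta$ to $\tilde\theta$ gives
\[
0=\nabla\tilde{\mathcal{L}}(\hat\theta)+\bar H\,(\tilde\theta-\hat\theta),\qquad \bar H:=\int_0^1\nabla^2\mathcal{L}_1\big(\hat\theta+t(\tilde\theta-\hat\theta)\big)\,dt,
\]
while, using $\nabla\mathcal{L}_N(\hat\theta)=0$ and the fundamental theorem of calculus on $\nabla\mathcal{L}_1(\hat\theta)-\nabla\mathcal{L}_1(\bar\theta)$ and on $\nabla\mathcal{L}_N(\hat\theta)-\nabla\mathcal{L}_N(\bar\theta)$,
\[
\nabla\tilde{\mathcal{L}}(\hat\theta)=\Delta\,(\hat\theta-\bar\theta),\qquad \Delta:=\int_0^1\big(\nabla^2\mathcal{L}_1-\nabla^2\mathcal{L}_N\big)\big(\bar\theta+t(\hat\theta-\bar\theta)\big)\,dt.
\]
Hence $\tilde\theta-\hat\theta=\bar H^{-1}\Delta\,(\bar\theta-\hat\theta)$, so $\parallel\tilde\theta-\hat\theta\parallel\le\VERT\bar H^{-1}\VERT\,\VERT\Delta\VERT\,\parallel\bar\theta-\hat\theta\parallel$, and it remains to bound $\VERT\bar H^{-1}\VERT$ and $\VERT\Delta\VERT$.

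\textbf{Step 2 (localization and invertibility of $\bar H$).} This is where the work lies. First I would show that $\hat\theta$ and $\tilde\theta$ both fall in a small ball $U(\rho_1)\subseteq U(\rho)$ about $\theta^*$: for $\hat\theta$ this is the standard $M$-estimator consistency argument, combining the separation condition (third regularity condition) with a uniform bound $\sup_{\theta\in\Theta}|\mathcal{L}_N(\theta)-\mathcal{L}^*(\theta)|\le\epsilon/2$ obtained from the Lipschitz condition and compactness of $\Theta$; for $\tilde\theta$ the same argument applies once one notes that $\tilde{\mathcal{L}}$ equals $\mathcal{L}_1$ plus the linear perturbation $-\langle\nabla\mathcal{L}_1(\bar\theta)-\nabla\mathcal{L}_N(\bar\theta),\theta\rangle$, whose slope has norm $O(n^{-1/2})$ on the good event, so $\tilde{\mathcal{L}}$ is also uniformly $o(1)$-close to $\mathcal{L}^*$. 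Once $\hat\theta,\tilde\theta\in U(\rho_1)$ the entire segment joining them lies in $U(\rho_1)$, and there $\nabla^2\mathcal{L}_1(\theta)\succeq\big(\mu_--\VERT\nabla^2\mathcal{L}_1(\theta^*)-I(\theta^*)\VERT-L'\rho_1\big)I_d\succeq(\mu_-/2)I_d$, using the Hessian concentration bound at $\theta^*$, a local Lipschitz modulus $L'$ of $\nabla^2\mathcal{L}_1$, and $\rho_1$ chosen small; this yields $\bar H\succeq(\mu_-/2)I_d$ and $\VERT\bar H^{-1}\VERT\le 2/\mu_-$.

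\textbf{Step 3 (bounding $\VERT\Delta\VERT$ and assembling).} For $\theta=\bar\theta+t(\hat\theta-\bar\theta)$ one has $\parallel\theta-\theta^*\parallel\le\parallel\bar\theta-\hat\theta\parallel+\parallel\hat\theta-\theta^*\parallel$, and writing
\[
\big(\nabla^2\mathcal{L}_1-\nabla^2\mathcal{L}_N\big)(\theta)=\big(\nabla^2\mathcal{L}_1-\nabla^2\mathcal{L}_N\big)(\theta^*)+\big[(\nabla^2\mathcal{L}_1(\theta)-\nabla^2\mathcal{L}_1(\theta^*))-(\nabla^2\mathcal{L}_N(\theta)-\nabla^2\mathcal{L}_N(\theta^*))\big]
\]
and integrating over $t\in[0,1]$ gives
\[
\VERT\Delta\VERT\le\VERT\nabla^2\mathcal{L}_1(\theta^*)-\nabla^2\mathcal{L}_N(\theta^*)\VERT+C'\big(\parallel\bar\theta-\hat\theta\parallel+\parallel\hat\theta-\theta^*\parallel\big),
\]
where $C'$ bounds the sum of the Hessian Lipschitz moduli of $\mathcal{L}_1$ and $\mathcal{L}_N$ on $U(\rho_1)$ and is $O(1)$ on the good event. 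Feeding the estimates of Steps 2 and 3 into Step 1 produces the claimed inequality with $C_2=(2/\mu_-)\max(1,C')$, independent of $(m,n,N)$.

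\textbf{Step 4 (the probability).} Each good event used above — consistency of $\hat\theta$ and $\tilde\theta$, the pointwise bound $\VERT\nabla^2\mathcal{L}_1(\theta^*)-I(\theta^*)\VERT\le\mu_-/4$, the uniform bounds on the Hessian Lipschitz moduli, and $\parallel\nabla\mathcal{L}_1(\bar\theta)-\nabla\mathcal{L}_N(\bar\theta)\parallel=O(n^{-1/2})$ — reduces, after writing the $\mathcal{L}_N$-quantities as averages $\frac1m\sum_{j=1}^m$ over the machines, to Markov's inequality applied to the sixteenth-moment bounds of the fourth regularity condition for each machine; since a sixteenth moment of an average of $n$ i.i.d.\ terms decays like $n^{-8}$, a union bound over the $m$ machines and over the finitely many good events bounds the total failure probability by $C_1mn^{-8}$. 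The only genuinely delicate point is the localization of $\tilde\theta$ in Step 2: because $\tilde{\mathcal{L}}$ is a data-dependent perturbation of $\mathcal{L}^*$, one must control it uniformly over all of $\Theta$ rather than just near $\theta^*$ before the Hessian lower bound and the invertibility of $\bar H$ can be invoked.
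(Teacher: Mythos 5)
Your proposal is correct and follows essentially the same route as the paper's argument (the paper quotes this theorem from Jordan--Lee--Yang and proves its manifold analogue, Theorem 2, in the appendix by exactly this strategy): localize $\hat\theta$ and $\tilde\theta$ on a high-probability event built from the sixteenth-moment bounds via Markov's inequality and a union bound over machines, reduce $\parallel\tilde\theta-\hat\theta\parallel$ to $\parallel\nabla\tilde{\mathcal{L}}(\hat\theta)\parallel$ using a local curvature lower bound, and then expand $\nabla\tilde{\mathcal{L}}(\hat\theta)$ as an integrated Hessian difference applied to $\hat\theta-\bar\theta$, split at $\theta^*$ into the concentration term $\VERT\nabla^2\mathcal{L}_1(\theta^*)-\nabla^2\mathcal{L}_N(\theta^*)\VERT$ plus Lipschitz remainders. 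The only cosmetic difference is that you invert the averaged Hessian $\bar H$ via a Newton-type identity (requiring $\tilde\theta$ interior), whereas the paper uses the strong-convexity inequality directly to get $\parallel\tilde\theta-\hat\theta\parallel\leq 2\parallel\nabla\tilde{\mathcal{L}}(\hat\theta)\parallel/((1-\rho)\mu_-)$; both yield the same final bound.
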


\section{Parallel optimizations on manifolds}
\label{sec-manifold}
Our work aims to generalize the typical gradient descent optimization framework to manifold optimization. In particular, we will use the ILEA framework as our working example to generalize parallel optimization algorithms. Instead of working with $\mathbb{R}^d$,  we have a $d$-dimensional manifold $M.$ We also consider a surrogate loss function $\tilde{\mathcal{L}}_j:\Theta\times\mathcal{Z}\rightarrow\mathbb{R},$ where $\Theta$ is a subset of the manifold $M$, that approximates the global loss function $\mathcal{L}_N.$ Here we choose to optimize $\tilde{\mathcal{L}}_j$ on the $j$th machine--that is, on different iterations we optimize on different machine for efficient exploration unlike from previous algorithm, where the surrogate function is always optimized on the first machine. 


To generalize the idea of moving along a gradient on the manifold $ M $, we use the retraction map,  which is not necessarily the exponential map that one would typically use in manifold gradient descent, but shares several important properties with the exponential map. Namely, a retraction on $M$ is a smooth mapping  $\mathcal{R}: TM\rightarrow M$
with the following properties
\begin{enumerate}
    \item $\mathcal{R}_{\theta}(0_{\theta})=\mathcal{R}(\theta, 0_{\theta})=\theta,$ 
    where 
    $\mathcal{R}_{\theta}$ is the restriction of $\mathcal{R}$ from $TM$ to the point $\theta$ and the tangent space $T_{\theta}M,$ 
    $0_{\theta}$ denotes the zero vector on $T_{\theta}M,$
    \item $D\mathcal{R}_{\theta}(0_{\theta})=D\mathcal{R}(\theta, 0_{\theta})={\rm id}_{T_{\theta}M},$ where ${\rm id}_{T_{\theta}M}$ denotes the identity mapping on $T_{\theta}M.$
\end{enumerate}
We also demand that
\begin{enumerate}
    \item For any $\theta_1, \theta_2\in M,$ curves $\mathcal{R}_{\theta_1}t\mathcal{R}_{\theta_1}^{-1}\theta_2$ and $\mathcal{R}_{\theta_2}s\mathcal{R}_{\theta_2}^{-1}\theta_1,$ where $s, t\in[0, 1],$ must coincide,
    \item The triangle inequality holds, that is for any $\theta_1, \theta_2, \theta_3\in M$, it is the case that $d_{\mathcal{R}}(\theta_1, \theta_2)\leq d_{\mathcal{R}}(\theta_2, \theta_3)+d_{\mathcal{R}}(\theta_3, \theta_1)$
    where 
   $d_{\mathcal{R}}(\theta_1, \theta_2)$ is the length of the curve $\mathcal{R}_{\theta_1}t\mathcal{R}_{\theta_1}^{-1}\theta_2$ for $t\in[0, 1].$
\end{enumerate}

Our construction starts with the Taylor's formula for $\mathcal{L}_N$ on the manifold $M$
\begin{equation*}
\mathcal{L}_N(\theta)=\mathcal{L}_N(\bar{\theta})+ \langle\nabla\mathcal{L}_N(\bar{\theta}), \log_{\bar{\theta}}\theta\rangle+ \sum_{s=2}^{\infty} \frac{1}{s!}\nabla^s\mathcal{L}_N(\bar{\theta})(\log_{\bar{\theta}}\theta)^{\otimes s}
\end{equation*}
Because we split the data across machines, evaluating the derivatives $\nabla^s\mathcal{L}_N(\bar{\theta})$ requires excessive processor communication. We want to reduce the amount of communication by replacing the \emph{global high-order derivatives} $\nabla^s\mathcal{L}_N(\bar{\theta})$  ($s\geq2$) with the \emph{high-order local derivatives} $\nabla^s\mathcal{L}_j(\bar{\theta}).$ This gives us the following surrogate to $\mathcal{L}_N$
\begin{equation*}
\tilde{\mathcal{L}}_j(\theta)=\mathcal{L}_N(\bar{\theta})+\langle\nabla\mathcal{L}_N(\bar{\theta}), \log_{\bar{\theta}}\theta\rangle+ \sum_{s=2}^{\infty}\frac{1}{s!}\nabla^s\mathcal{L}_j(\bar{\theta})(\log_{\bar{\theta}}\theta)^{\otimes s}.
\end{equation*}
Then we have the following approximation error
\begin{align*}
\tilde{\mathcal{L}}_j(\theta)-\mathcal{L}_N(\theta)&= 
\frac{1}{2}\langle\log_{\bar{\theta}}\theta, (\nabla^2\mathcal{L}_j(\bar{\theta})-\nabla^2\mathcal{L}_N(\bar{\theta}))\log_{\bar{\theta}}\theta\rangle+O\big(d_{g}(\bar{\theta}, \theta)^3\big) \\
  &=O\bigg(\frac{1}{\sqrt{n}}d_{g}(\bar{\theta}, \theta)^2+d_{g}(\bar{\theta}, \theta)^3\bigg).
\end{align*}

We replace $\sum_{s=2}^{\infty}\frac{1}{s!}\nabla^s\mathcal{L}_j(\bar{\theta})(\log_{\bar{\theta}}\theta)^{\otimes s}$ with $\mathcal{L}_j(\theta)-\mathcal{L}_j(\bar{\theta})-\big\langle\nabla\mathcal{L}_j(\bar{\theta}), \log_{\bar{\theta}}\theta\big\rangle:$
\begin{align*}
\tilde{\mathcal{L}}_j(\theta) &=\mathcal{L}_N(\bar{\theta})+ \langle\nabla\mathcal{L}_N(\bar{\theta}),\log_{\bar{\theta}}\theta\rangle+ \mathcal{L}_j(\theta)- \mathcal{L}_j(\bar{\theta})-\langle\nabla\mathcal{L}_j(\bar{\theta}), \log_{\bar{\theta}}\theta\rangle \\
&=\mathcal{L}_j(\theta)+(\mathcal{L}_N(\bar{\theta})-\mathcal{L}_j(\bar{\theta}))+\langle\nabla\mathcal{L}_N(\bar{\theta})-\nabla\mathcal{L}_j(\bar{\theta}),\log_{\bar{\theta}}\theta\rangle.
\end{align*}
Since we are not interested in the value of $\tilde{\mathcal{L}}_j$ but in its minimizer, we omit the additive constant $(\mathcal{L}_N(\bar{\theta})-\mathcal{L}_j(\bar{\theta}))$ and redefine $\tilde{\mathcal{L}}_j$ as
$\tilde{\mathcal{L}}_j(\theta):=\mathcal{L}_j(\theta)-\langle\nabla\mathcal{L}_j(\bar{\theta})-\nabla\mathcal{L}_N(\bar{\theta}), \log_{\bar{\theta}}\theta\rangle.$
Then we can generalize the exponential map $\exp_{\bar{\theta}}$ and the inverse exponential map $\log_{\bar{\theta}}$ to the retraction map $\mathcal{R}_{\bar{\theta}}$ and the inverse retraction map $\mathcal{R}_{\bar{\theta}}^{-1},$ which is also called the lifting, and redefine $\tilde{\mathcal{L}}_j$ 
\begin{equation*}
\tilde{\mathcal{L}}_j(\theta):=\mathcal{L}_j(\theta)-\langle\nabla\mathcal{L}_j(\bar{\theta})-\nabla\mathcal{L}_N(\bar{\theta}), \mathcal{R}_{\bar{\theta}}^{-1}\theta\rangle.
\end{equation*}
Therefore we have the following generalization of the Iterative Local Estimation Algorithm (ILEA) for the manifold $M$:
\begin{algorithm}
	\caption{ILEA for Manifolds}
	\label{alg:ILEA_manifold}
	Initialize $\theta_0=\bar{\theta};$\\
	\For{$s=0, 1, \dots, T-1$}{
		Transmit the current iterate $\theta_s$ to local machines $\{\mathcal{M}_j\}^m_{j=1};$\\
		\For{$j=1, \dots, m$}{
			Compute the local gradient $\nabla\mathcal{L}_j(\theta_s)$ at machine $\mathcal{M}_j;$\\
			Transmit the local gradient $\nabla\mathcal{L}_j(\theta_s)$ to machine $\mathcal{M}_s;$\\
		}
			Calculate the global gradient $\nabla\mathcal{L}_N(\theta_s)=\frac{1}{m}\sum^m_{j=1}\nabla\mathcal{L}_j(\theta_s)
			)$ in Machine $\mathcal{M}_s;$\\
			Form the surrogate function $\tilde{\mathcal{L}}_s(\theta)=\mathcal{L}_s(\theta)-\langle\mathcal{R}_{\theta_s}^{-1}\theta, \nabla\mathcal{L}_s(\theta_s)-\nabla\mathcal{L}_N(\theta_s)\rangle;$\\
			Update $\theta_{s+1}\in\arg\min\tilde{\mathcal{L}}_s;$
		}
	Return $\theta_T$
\end{algorithm}

\section{Convergence rates of the algorithm}
\label{sec:theory}
To establish some theoretical convergence rates on our algorithm, we consequently have to impose some regularity conditions on the parameter space $\Theta,$ the loss function $\mathcal{L}$ and the population risk $\mathcal{L}^*$. We must establish these conditions specifically for manifolds  instead of simply using the regularity conditions placed on Euclidean spaces. For example, in the manifold the Hessians $\nabla^2\mathcal{L}(\theta, x), \nabla^2\mathcal{L}(\theta', x)$ are defined in  different tangent spaces meaning there cannot be any linear expressions of the second-order derivatives. 


In the manifold for any $\xi\in T_{\theta'}M$ we can define the vector field as
$\xi(\theta)=D(\mathcal{R}_{\theta}^{-1}\theta')\xi.$ We can also take the covariant derivative of $\xi(\theta)$ along the retraction $\mathcal{R}_{\theta'}t\mathcal{R}_{\theta'}\theta:$
\begin{multline}\label{covR}
   \nabla_{D\big(R_{\theta'}^{-1}(R_{\theta'}t\mathcal{R}_{\theta'}\theta)\big)^{-1}\mathcal{R}_{\theta'}^{-1}\theta}\xi(\mathcal{R}_{\theta'}t\mathcal{R}_{\theta'}\theta)=\\
   \nabla_{D\big(R_{\theta'}^{-1}(R_{\theta'}t\mathcal{R}_{\theta'}\theta)\big)^{-1}\mathcal{R}_{\theta'}^{-1}\theta}D\Big(\mathcal{R}_{\mathcal{R}_{\theta'}t\mathcal{R}_{\theta'}\theta}^{-1}\theta'\Big)\xi=\nabla D(t, \theta, \theta')\xi. 
\end{multline}
The expression~\eqref{covR} defines the linear map $\nabla D(t, \theta, \theta')$ from 
$T_{\theta'}M$ to $T_{\mathcal{R}_{\theta'}t\mathcal{R}_{\theta'}\theta}M$ and want to impose some conditions to this map. Finally, we impose the following regularity conditions on the parameter space $\Theta$, the loss function $\mathcal{L}$ and the population risk $\mathcal{L}^*$.
\begin{enumerate}
\item The parameter space $\Theta$ is a compact and $\mathcal{R}$-convex subset of $M,$ which means that for any $\theta_1, \theta_2\in\Theta$ curves $\mathcal{R}_{\theta_1}t\mathcal{R}_{\theta_1}\theta_2$ and $\exp_{\theta_1}t\log_{\theta_1}\theta_2$ must be within $\Theta$
for any $\theta_1, \theta_2\in M$  and also demand that there exists $L'\in\mathbb{R}$ such that
    \begin{equation*}
    d_{\mathcal{R}}(\theta_1, \theta_2)\leq L'd_g(\theta_1, \theta_2),
    \end{equation*}
    where $d_g(\theta_1, \theta_2)$ is the geodesic distance,
\item  The matrix $I(\theta)=\nabla^2\mathcal{L}^*(\theta)$ is invertible at $\theta^*:$ $\exists$ constants  $\mu_{-}, \mu_{+}\in\mathbb{R}$ such that 
\begin{equation*}
\mu_{-}{\rm id}_{\theta^*}\preceq I(\theta^*)\preceq\mu_{+}{\rm id}_{\theta^*},
\end{equation*} 
\item  For any $\delta>0,$ there exists $\varepsilon>0$ such that
\begin{equation*}
\inf \ \mathbb{P}\Big\{\inf_{d_g(\theta^*, \theta)\geq\delta}\big|\mathcal{L}(\theta)-\mathcal{L}(\theta^*)\big|\geq\varepsilon\Big\}=1,
\end{equation*}
\item There exist constants $(G, L)$ and a function $\mathcal{K}(x)$ such that for all $\theta, \theta'\in U$ and $t\in[0, 1]$
\begin{align*}
 \mathbb{E}\parallel\nabla\mathcal{L}(\theta, \mathcal{D})\parallel^{16}\leq G^{16}, \qquad \mathbb{E}\big\VERT\nabla^2\mathcal{L}(\theta, \mathcal{D})-I(\theta)\big\VERT^{16}\leq L^{16}, \\
 \parallel\nabla D(t, \theta, \theta')^*\nabla\mathcal{L}(\mathcal{R}_{\theta'}t\mathcal{R}_{\theta'}\theta, x)\parallel\leq \mathcal{K}(x)d_{\mathcal{R}}(\theta, \theta'), \\
\Big\VERT\big(D\mathcal{R}_{\theta}^{-1}\hat{\theta}\big)^*\nabla^2\mathcal{L}(\theta, x)\big(D\mathcal{R}_{\hat{\theta}}^{-1}\theta\big)^{-1}-\big(D\mathcal{R}_{\theta'}^{-1}\hat{\theta}\big)^*\nabla^2\mathcal{L}(\theta', x)\big(D\mathcal{R}_{\hat{\theta}}^{-1}\theta'\big)^{-1}\Big\VERT\leq\mathcal{K}(x)d_{\mathcal{R}_{\bar{\theta}}}(\theta, \theta'),\\
\Big\VERT\big(D\mathcal{R}_{\theta}^{-1}\hat{\theta}\big)^*\nabla^2\mathcal{L}(\theta, x)(D\mathcal{R}_{\theta}^{-1}\hat{\theta}\big)-\big(D\mathcal{R}_{\theta'}^{-1}\hat{\theta}\big)^*\nabla^2\mathcal{L}(\theta', x)\big(D\mathcal{R}_{\theta'}^{-1}\hat{\theta}\big)\Big\VERT\leq\mathcal{K}(x)d_{\mathcal{R}_{\bar{\theta}}}(\theta, \theta'),
\end{align*}
where $\VERT \VERT$ is a spectral norm of matrices, $  \VERT A\VERT=\sup\{\parallel Ax\parallel:x\in \mathbb{R}^{n}, \quad \parallel x\parallel=1\}.$ Moreover, $\mathcal{K}$ satisfies $\mathbb{E}\mathcal{K}\leq K^{16}$ for some constant $K>0.$ 
\end{enumerate}
Given these conditions, we have the following theorem:
\begin{theorem}
  If the standard regularity conditions holds,  the initial estimator $\bar{\theta}$ lies in the neighborhood $U$ of $\theta^*$ and 
  \begin{equation*}
  \begin{split}
  \Big\VERT\big(D\mathcal{R}_{\theta^*}^{-1}\hat{\theta}\big)^*\big(\nabla^2\tilde{\mathcal{L}}_s(\theta^*)- I(\theta^*)\big)\big(D\mathcal{R}_{\theta^*}^{-1}\hat{\theta}\big)\Big\VERT\leq\frac{\rho\mu_{-}R_{-}}{4},\\
  \Big\VERT\big(D\mathcal{R}_{\theta}^{-1}\hat{\theta}\big)^*\nabla^2\tilde{\mathcal{L}}_s(\theta, x)(D\mathcal{R}_{\theta}^{-1}\hat{\theta}\big)-\big(D\mathcal{R}_{\theta'}^{-1}\hat{\theta}\big)^*\nabla^2\tilde{\mathcal{L}}_s(\theta', x)\big(D\mathcal{R}_{\theta'}^{-1}\hat{\theta}\big)\Big\VERT\leq\mathcal{K}(x)d_{\mathcal{R}_{\bar{\theta}}}(\theta, \theta'),
  \end{split}
  \end{equation*}
  where $R_{-}=\frac{1}{\Big\VERT\big((D\mathcal{R}_{\theta^*}^{-1}\hat{\theta})^*(D\mathcal{R}_{\theta^*}^{-1}\hat{\theta})\big)^{-1}\Big\VERT},$ then any minimizer $\tilde{\theta}$ of the surrogate loss function $\tilde{\mathcal{L}}_s(\theta)$ satisfies
  \begin{multline*}
     d_{\mathcal{R}}(\tilde{\theta}, \hat{\theta})\leq C_2\bigg(1+d_{\mathcal{R}}(\bar{\theta}, \hat{\theta})+d_{\mathcal{R}}(\theta^*, \hat{\theta})+\\
     C_3\Big\VERT\big(D\mathcal{R}_{\theta^*}^{-1}\hat{\theta}\big)^*\big(\nabla^2\mathcal{L}_s(\theta^*)-\nabla^2\mathcal{L}_N(\theta^*)\big)\big(D\mathcal{R}_{\hat{\theta}}^{-1}\theta^*\big)^{-1}\Big\VERT\bigg)d_{\mathcal{R}}(\bar{\theta}, \hat{\theta}),
  \end{multline*}
  with probability at least $1-C_1mn^{-8},$ where constants $C_1, C_2$ and $C_3$ are independent of $(m, n, N).$
\end{theorem}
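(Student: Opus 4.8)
The plan is to mirror the Euclidean argument behind Theorem~1, but to carry out every manipulation on tangent spaces, using the retraction differentials $D\mathcal{R}$ and the linear map $\nabla D(t,\theta,\theta')$ of \eqref{covR} to transport gradients and Hessians between the relevant tangent spaces. I would organize the proof into a localization step, a linearization step, and an estimation step. \emph{Localization.} First I would show that, on an event of probability at least $1-C_1mn^{-8}$, both $\tilde\theta$ and $\hat\theta$ lie in the neighborhood $U$ of $\theta^*$. Since $\tilde{\mathcal{L}}_s(\theta)=\mathcal{L}_s(\theta)-\langle\nabla\mathcal{L}_s(\bar\theta)-\nabla\mathcal{L}_N(\bar\theta),\mathcal{R}_{\bar\theta}^{-1}\theta\rangle$, the deviation $\sup_{\theta\in\Theta}|\tilde{\mathcal{L}}_s(\theta)-\mathcal{L}^*(\theta)|$ is controlled by $\sup_\theta|\mathcal{L}_s(\theta)-\mathcal{L}^*(\theta)|$ together with $\|\nabla\mathcal{L}_s(\bar\theta)-\nabla\mathcal{L}_N(\bar\theta)\|$ times the $\mathcal{R}$-diameter of $\Theta$; applying Markov's inequality to the sixteenth-moment bounds of regularity condition~4 and taking a union bound over the $m$ machines makes each of these $o(1)$ off the stated event, and condition~3 then forces $d_g(\tilde\theta,\theta^*)$ — hence $d_{\mathcal{R}}(\tilde\theta,\theta^*)\le L'd_g(\tilde\theta,\theta^*)$ — to be small.

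\emph{Linearization.} On that event, interior minimality gives the first-order conditions $\nabla\tilde{\mathcal{L}}_s(\tilde\theta)=0$ and $\nabla\mathcal{L}_N(\hat\theta)=0$. Differentiating the surrogate, $\nabla\tilde{\mathcal{L}}_s(\theta)=\nabla\mathcal{L}_s(\theta)-\big(D\mathcal{R}_{\bar\theta}^{-1}\theta\big)^*\big(\nabla\mathcal{L}_s(\bar\theta)-\nabla\mathcal{L}_N(\bar\theta)\big)$, so the optimality condition for $\tilde\theta$ reads $\nabla\mathcal{L}_s(\tilde\theta)=\big(D\mathcal{R}_{\bar\theta}^{-1}\tilde\theta\big)^*\big(\nabla\mathcal{L}_s(\bar\theta)-\nabla\mathcal{L}_N(\bar\theta)\big)$. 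I would then Taylor-expand every gradient covariantly along the retraction curve issued from $\hat\theta$, using \eqref{covR} and $\nabla\mathcal{L}_N(\hat\theta)=0$; writing $\bar H^{(j)}$ for the corresponding integral-averaged pulled-back Hessian $\int_0^1\big(D\mathcal{R}_{\gamma(t)}^{-1}\hat\theta\big)^*\nabla^2\mathcal{L}_j(\gamma(t))\big(D\mathcal{R}_{\hat\theta}^{-1}\gamma(t)\big)^{-1}\,dt$ along the relevant curve $\gamma$, and cancelling the contributions common to $\mathcal{L}_s$ and $\mathcal{L}_N$, the optimality relation reduces to the schematic linear identity $\bar H^{(s)}\,\mathcal{R}_{\hat\theta}^{-1}\tilde\theta=\big(\bar H^{(s)}-\bar H^{(N)}\big)\,\mathcal{R}_{\hat\theta}^{-1}\bar\theta+(\text{transport corrections})$, the exact manifold analog of the Euclidean relation $\tilde\theta-\hat\theta=\bar H_1^{-1}(\bar H_1-\bar H_N)(\bar\theta-\hat\theta)$.

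\emph{Estimation.} It remains to invert $\bar H^{(s)}$ and to bound $\bar H^{(s)}-\bar H^{(N)}$, both after conjugation by $D\mathcal{R}_{\theta^*}^{-1}\hat\theta$. For the inversion I would start from $\mu_-{\rm id}_{\theta^*}\preceq I(\theta^*)$ of condition~2, pass from $I(\theta^*)$ to $\nabla^2\tilde{\mathcal{L}}_s(\theta^*)$ using the first displayed hypothesis of the theorem (a perturbation of spectral norm at most $\rho\mu_-R_-/4$), and pass from $\theta^*$ to the curve points $\gamma(t)$ using the second displayed hypothesis together with the localization bounds; a Weyl/Neumann-series argument then yields invertibility with the spectral norm of the inverse bounded by a fixed multiple of $1/(\mu_-R_-)$, the constant $R_-$ absorbing the operator-norm distortion introduced by the retraction differential. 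For the difference, the Hessian-Lipschitz inequalities of condition~4 replace $\bar H^{(s)}-\bar H^{(N)}$ by $\big(D\mathcal{R}_{\theta^*}^{-1}\hat\theta\big)^*\big(\nabla^2\mathcal{L}_s(\theta^*)-\nabla^2\mathcal{L}_N(\theta^*)\big)\big(D\mathcal{R}_{\hat\theta}^{-1}\theta^*\big)^{-1}$ up to additive terms of order $d_{\mathcal{R}}(\bar\theta,\theta^*)$ and $d_{\mathcal{R}}(\hat\theta,\theta^*)$, which the triangle inequality for $d_{\mathcal{R}}$ turns into $d_{\mathcal{R}}(\bar\theta,\hat\theta)+d_{\mathcal{R}}(\theta^*,\hat\theta)$. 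Combining these with $d_{\mathcal{R}}(\tilde\theta,\hat\theta)\le\|(D\mathcal{R}_{\theta^*}^{-1}\hat\theta)^{-1}\|\,\|\mathcal{R}_{\hat\theta}^{-1}\tilde\theta\|$ and the fact that $\|\mathcal{R}_{\hat\theta}^{-1}\bar\theta\|$ is at most a fixed multiple of $d_{\mathcal{R}}(\bar\theta,\hat\theta)$ produces exactly the claimed inequality, with the additive $1$ in the bracket reflecting the transport corrections (which have no Euclidean counterpart) and the probability $1-C_1mn^{-8}$ being precisely the union-bound budget of the localization step.

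\emph{Main obstacle.} The delicate point is the tangent-space bookkeeping: in $\mathbb{R}^d$ the gradients and Hessians at different points may simply be subtracted, whereas here they live in different spaces, so the Taylor expansion of the gradient must be carried out covariantly along the retraction curve — this is the role of $\nabla D(t,\theta,\theta')$ in \eqref{covR} — and making the cancellations in the linearization step exact rather than merely approximate requires the coincidence-of-curves and $d_{\mathcal{R}}$-triangle-inequality axioms imposed on $\mathcal{R}$. Equally, establishing invertibility of the conjugated averaged Hessian with a clean $1/(\mu_-R_-)$ bound — the manifold surrogate for strong convexity — is where the hypothesis involving $R_-$ is genuinely needed, and it is the technical heart of the argument; the concentration part (Markov with sixteenth moments, union bound over the $m$ machines) is routine by comparison.
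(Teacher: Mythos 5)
Your plan follows the paper's own argument in all essentials: localization onto good events carrying the $1-C_1mn^{-8}$ union bound, a lower bound on the pulled-back Hessian of the surrogate (the paper packages your Neumann-series inversion of $\bar H^{(s)}$ as $(1-\rho)\mu_{-}R_{-}R_{+}$-strong convexity, yielding $d_{\mathcal{R}}(\tilde\theta,\hat\theta)\le 2\|\nabla\tilde{\mathcal{L}}_s(\hat\theta)\|/((1-\rho)\mu_{-}R_{-}R_{+})$ and thereby avoiding an expansion around $\tilde\theta$), the covariant Taylor expansion of the gradient along the retraction curve producing the averaged operator $H_s$ plus the $\int_0^1\nabla D(t,\bar\theta,\hat\theta)^*\nabla\mathcal{L}_s\,dt$ transport terms, and the Lipschitz-Hessian reduction of $H_s-H_N$ to the Hessian difference at $\theta^*$ up to $d_{\mathcal{R}}(\bar\theta,\hat\theta)+d_{\mathcal{R}}(\theta^*,\hat\theta)$. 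You also correctly trace the additive constant in the bracket to the $\nabla D$ transport corrections, so there is no substantive discrepancy with the paper's proof.
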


\section{Simulation study and data analysis}\label{sec-simu}
To examine the quality of our parallel algorithm we first apply it to the estimation of Fr\'echet means on spheres, which has closed form expressions for the estimation of the extrinsic mean (true empirical minimizer). In addition, we apply our algorithm to Netflix movie-ranking data set as an example of optimization over Grassmannian manifolds in the low-rank matrix completion problem. In the following results, we demonstrate the utility of our algorithm both for high dimensional manifold-valued data (Section~\ref{subsec:frechet}) and Euclidean space data with non-Euclidean parameters (Section~\ref{subsec:netflix}). We wrote the code for our implementations in Python and carried out the parallelization of the code through MPI\footnote{Our code is available at \url{https://github.com/michaelzhang01/parallel_manifold_opt}}\cite{Dalcin:2005}.

\subsection{Estimation of Fr\'echet means on manifolds}\label{subsec:frechet}
We first consider the estimation problem of Fr\'echet means \cite{frechet} on manifolds.  In particular, the manifold under consideration is the sphere in which we wish to estimate both the extrinsic and intrinsic mean \cite{linclt}.  Let $M$ be a general manifold and $\rho$ be a distance on $M$ which can be an intrinsic distance, by employing a Riemannian structure of $M$, or an extrinsic distance, via some embedding $J$ onto some Euclidean space. Also, let $x_1,\ldots, x_N$  be sample of point on the hypersphere $S^d$, the sample Fr\'echet mean of $x_1,\ldots, x_n$ is defined as
\begin{align}
\hat\theta=\arg\min_{\theta\in M=S^d}\sum_{i=1}^N\rho^2(\theta, x_i),
\end{align}
where $\rho$ is some distance on the sphere.

The extrinsic distance, for our spherical example, is defined to be $\rho(x,y)=\|J(x)-J(y)\|=\|x-y\|$ with $\|\cdot\|$ as the Euclidean distance and the embedding map $J(x)=x\in \mathbb R^{d+1}$ as the identity map. We call $\hat\theta$ the extrinsic Fr\'echet mean on the sphere. We choose this example in our simulation, as we know the true global optimizer which is given by $\bar{x}/\|\bar x\|$ where $\bar x$ is the standard sample mean of $x_1,\ldots, x_N$ in Euclidean distance. The intrinsic Fr\'echet mean, on the other hand, is defined to be where the distance $\rho$ is the geodesic distance (or the arc length). In this case we compare the estimator obtained from the parallel algorithm with the optimizer obtained from a gradient descent algorithm along the sphere applied to the entire data set. Despite that the spherical case may be an ``easy'' setting as it has a Betti number of zero, we chose this example so that we have ground truth with which to compare our results. We, in fact, perform favorably even when the dimensionality of the data is high as we increase the number of processors.

For this example, we simulate one million observations from a $100$-dimensional von Mises distribution projected onto the unit sphere with mean sampled randomly from $ N(0,I) $ and a precision of $2$. For the extrinsic mean example, the closed form expression of the sample mean acts as a ``ground truth'' to which we can compare our results. In both the extrinsic and intrinsic mean examples, we run $ 20 $ trials of our algorithm over 1, 2, 4, 6, 8 and 10 processors. For the extrinsic mean simulations we compare our results to the true global optimizer in terms of root mean squared error (RMSE) and for the intrinsic mean simulations we compare our distributed results to the single processor results, also in terms of RMSE.

\begin{figure}
	\centering
	\includegraphics[width=.5\linewidth]{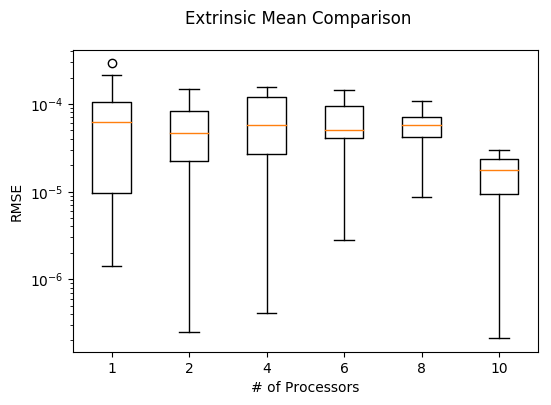}\includegraphics[width=.5\linewidth]{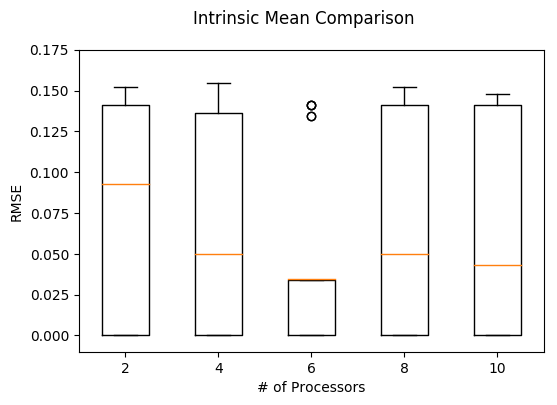}
	\caption{Extrinsic mean comparison (left) and intrinsic mean comparison (right) on spheres in $ S^{99} $}\label{fig:parallel_comparison}
\end{figure}

As we can see in Figure~\ref{fig:parallel_comparison}, even if we divide our observations to as many as 10 processors we still obtain favorable results for the estimation of the Fr\'echet mean in terms of RMSE to the ground truth for the extrinsic mean case and the single processor results for the intrinsic mean case. To visualize this comparison, we show in Figure~\ref{fig:2d_extrinsic_mean} an example of our method's performance on two dimensional data so that we may see that our optimization results yield a very close estimate to the true global optimizer.

\begin{figure}
	\centering
	\includegraphics[width=.5\linewidth]{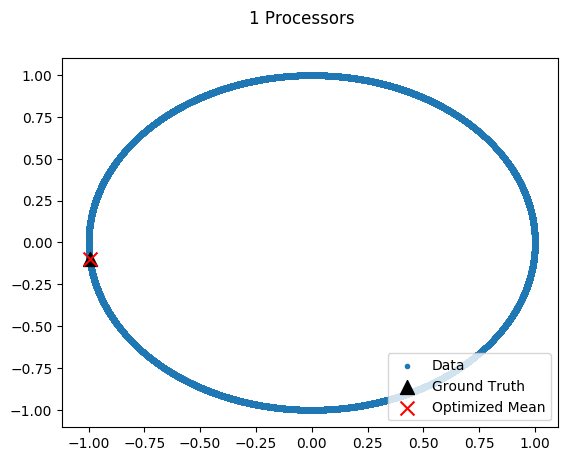}\includegraphics[width=.5\linewidth]{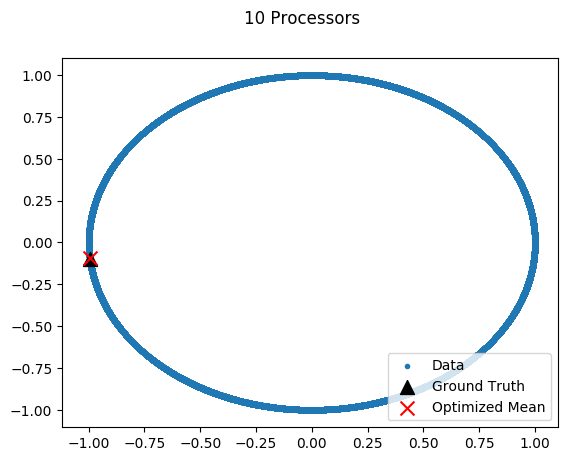}
	\caption{Extrinsic mean results on $ S^{1} $, for one (left) and ten (right) processors}\label{fig:2d_extrinsic_mean}
\end{figure}


\subsection{Real data analysis: the Netflix example}\label{subsec:netflix}
Next, we consider an application of our algorithm to the Netflix movie rating dataset. This dataset of over a million entries, $ X \in \mathbb{R}^{M \times N} ,$ consists of $M = 17770$ movies and $N = 480189$ users, in which only a sparse subset of the users and movies have ratings. In order to build a better recommendation systems to users, we can frame the problem of predicting users' ratings for movies as a low-rank matrix completion problem by learning the rank-$ r $ Grassmannian manifold $ U \in \mbox{Gr(M, r)} $ which optimizes for the set of observed entries $ (i,j) \in \Omega$ the loss function 
\begin{equation}
		L(U) = \frac{1}{2} \sum_{(i,j) \in \Omega} \left( (UW)_{ij} - X_{ij} \right)^{2} + \frac{\lambda^{2}}{2} \sum_{(i,j) \notin \Omega}(UW)_{ij},		
\end{equation}
 where $W$ is $r$-by-$N$ matrix. Each user $ k$ has the loss function $\mathcal{L}(U,k)=\frac{1}{2} \left|  c_k\circ\left(  Uw_k(U)-X_k \right)     \right|^{2}$ , where $\circ$ is  the Hadamard product, $(w_k)^{i}=W_{ik},$ and
\begin{equation*}
\begin{split}
    (c_{k})^i=\begin{cases}
    1, & {\rm if} \ \ \ (i, k)\in \Omega \\
    \lambda, & {\rm if} \ \ \ (i, k)\notin \Omega 
    \end{cases},
    \qquad
    (X_{k})^i=\begin{cases}
    X_{ik}, & {\rm if} \ \ \ (i, k)\in \Omega \\
    0, & {\rm if} \ \ \ (i, k)\notin \Omega, 
    \end{cases}\\
    w_k(U)=\big(U^T{\rm diag}(c_k\circ c_k)U\big)^{-1}U^T\big(c_k\circ c_k\circ X_k\big).
\end{split}
\end{equation*}
Which results in the following gradient
\begin{equation*}
    \nabla\mathcal{L}(U, k)= \big(c_k\circ c_k\circ(Uw_k(U)-X_k)\big)w_k(U)^T=
    {\rm diag}(c_k\circ c_k)(Uw_k(U)-X_k)w_k(U)^T.
\end{equation*}

We can assume that $N=pq,$ then for each local machine $\mathcal{M}_j,$ $j=1, ..., p,$ we have the local function $    \mathcal{L}_j(U)=\frac{1}{q}\sum_{k=(j-1)q+1}^{jq}\mathcal{L}(U,k)$. So the global function is 
\begin{equation*}
    \mathcal{L}_N(U)=\frac{1}{p}\sum_{j=1}^p\mathcal{L}_j(U)=\frac{1}{pq}\sum_{k=1}^{pq}\mathcal{L}(U, k)=\frac{1}{N}L(U).
\end{equation*}

For iterations $s=0, 1, ..., P-1$ we have $\nabla\mathcal{L}_j(U_s)=\sum_{k=(j-1)q+1}^{jq}\nabla\mathcal{L}(U_s, k)$. Therefore the global gradient is $\nabla\mathcal{L}_N(U_s)=\frac{1}{p}\sum_{j=1}^p\nabla\mathcal{L}_j(U_s)$. Instead of the logarithm map we will use the inverse retraction map 
  \begin{equation*}
\begin{array}{cccc}
    {\rm R}_{[U]}^{-1}: & {\rm Gr}(m, r) & \rightarrow & T_{[U]}{\rm Gr}(m, r) \\
                & [V]               &\mapsto & V-U(U^TU)^{-1}U^TV.
\end{array}
\end{equation*}
  Which gives us the following surrogate function
  \begin{align*}
      \tilde{\mathcal{L}}_s(V)&=\mathcal{L}_s(V)-\langle V-U_s(U_s^TU_s)^{-1}U_s^TV, \nabla\mathcal{L}_s(U_s)-\nabla\mathcal{L}_N(U_s)\rangle\\
      &=
      \mathcal{L}_s(V)-\langle V, \nabla\mathcal{L}_s(U_s)-\nabla\mathcal{L}_N(U_s)\rangle.
  \end{align*}
and its gradient
  \begin{equation*}
      \nabla\tilde{\mathcal{L}}_s(V)=\nabla\mathcal{L}_s(V)-(I_m-V(V^TV)^{-1}V^T)(\nabla\mathcal{L}_s(U_s)-\nabla\mathcal{L}_N(U_s)).
  \end{equation*}
To optimize with respect to our loss function, we have to find $U_{s+1}=\arg\min\tilde{\mathcal{L}}_s$. To do this, we move according to the steepest descent by taking step size $ \lambda_0 $ in the direction $ \nabla\tilde{\mathcal{L}}_s(U_s)  $ by taking the retraction, $ U_{s+1}= R_{[U_s]}\left(\lambda_0 \nabla\tilde{\mathcal{L}}_s(U_s) \right) $.\footnote{We select the step size parameter according to the modified Armijo algorithm seen in \cite{boumal2014thesis}.}

For our example we set the matrix rank to $ r=10 $ and the regularization parameter to $ \lambda=0.1 $ and divided the data randomly across $ 4 $ processors. Figure~\ref{fig:netflix} shows that we can perform distributed manifold gradient descent in this complicated problem and we can reach convergence fairly quickly (after about $ 1000 $ seconds).

\begin{figure}
	\centering
	\includegraphics[width=.75\linewidth]{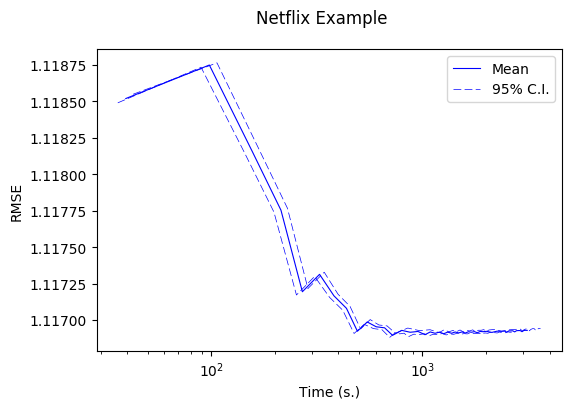}
	\caption{Test set RMSE of the Netflix example over time, evaluated on 10 trials.}\label{fig:netflix}
\end{figure}

\section{Conclusion}\label{sec:conclusion}
We propose in this paper a communication efficient parallel algorithm for general optimization problems on manifolds which is applicable to many different manifold spaces and loss functions. Moreover, our proposed algorithm can explore the geometry of the underlying space efficiently and perform well in simulation studies and practical examples all while having theoretical convergence guarantees.

In the age of ``big data'', the need for distributable inference algorithms is crucial as we cannot reliably expect entire datasets to sit on a single processor anymore. Despite this, much of the previous work in parallel inference has only focused on data and parameters in Euclidean space. Realistically, much of the data that we are interested in is better modeled by manifolds and thus we need fast inference algorithms that are provably suitable for situations beyond the Euclidean setting. In future work, we aim to extend the situations under which parallel inference algorithms are generalizable to manifolds and demonstrate more critical problems (in neuroscience or computer vision, for example) in which parallel inference is a crucial solution.


\subsubsection*{Acknowledgments}

\vspace{-.5em}
 Lizhen Lin acknowledges the support from NSF grants IIS  1663870, DMS Career 1654579,  ARO grant W911NF1510440 and a DARPA grant N66001-17-1-4041.  Bayan Saparbayeva was partially supported by DARPA N66001-17-1-4041.
 Michael Zhang is supported by NSF grant 1447721.
 
 \clearpage
\bibliographystyle{plain}
\bibliography{main}

\clearpage
\section*{Proof to Theorem 2}

For $j=1, ..., n,$ let $K_j=\sum_{i=1}^n\mathcal{K}(x_{ij})$ and $\delta_{\rho}=\min\Big\{\rho, \frac{\rho\mu_{-}R_{-}}{4K}\Big\}.$ Consider the following "good events":
  \begin{align*}
  \begin{split}
  \mathcal{E}_0 = & \bigg\{d_{\mathcal{R}}(\hat{\theta}, \theta^*)\leq\min\Big\{ \frac{\rho\mu_{-}}{8K}, \ \ \frac{(1-\rho)\mu_{-}\delta_{\rho}}{8\mu_{+}}, \ \ \sqrt{\frac{(1-\rho)\mu_{-}\delta_{\rho}}{16K}}\Big\}\bigg\} \\
  \mathcal{E}_j = & \bigg\{K_j\leq2K,   \Big\VERT\big(D\mathcal{R}_{\theta^*}^{-1}\hat{\theta}\big)^*\big(\nabla^2\mathcal{L}_j(\theta^*)- I(\theta^*)\big)\big(D\mathcal{R}_{\theta^*}^{-1}\hat{\theta}\big)\Big\VERT\leq\frac{\rho\mu_{-}R_{-}}{4},   \parallel\nabla\mathcal{L}_j(\theta^*)\parallel
                            \leq\frac{(1-\rho)\mu_{-}\delta_{\rho}}{4}\bigg\}.
  \end{split}
  \end{align*}
  
  \begin{lemma}
  Under standard regularity conditions, we have
  \begin{equation*}
  \mathbb{P}\bigg(\bigcup_{j=1}^m\mathcal{E}_j^c\bigg)\leq\big(c_1+c_2(\log2d)^{16}L^{16}+c_3G^{16}\big)\frac{m}{n^8},
  \end{equation*}
  where constants $c_1, c_2$ and $c_3$ are independent of $(n, m, N, d, G, L).$
  \end{lemma}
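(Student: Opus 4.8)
The plan is to reduce the statement to a union bound together with classical moment inequalities applied machine by machine. By the union bound, $\mathbb{P}\big(\bigcup_{j=1}^m \mathcal{E}_j^c\big)\le\sum_{j=1}^m\mathbb{P}(\mathcal{E}_j^c)$, and since the argument is the same for every $j$ it suffices to bound $\mathbb{P}(\mathcal{E}_j^c)$ for a single $j$ by $\big(c_1+c_2(\log 2d)^{16}L^{16}+c_3G^{16}\big)n^{-8}$ and then multiply by $m$. Fix $j$. The event $\mathcal{E}_j$ is the intersection of three sub-events: $\{K_j\le 2K\}$; a bound on the conjugated Hessian fluctuation $\VERT(D\mathcal{R}_{\theta^*}^{-1}\hat\theta)^*(\nabla^2\mathcal{L}_j(\theta^*)-I(\theta^*))(D\mathcal{R}_{\theta^*}^{-1}\hat\theta)\VERT$; and a bound on $\|\nabla\mathcal{L}_j(\theta^*)\|$. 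By one more union bound it is enough to bound the probability of the complement of each, and these three estimates will furnish the $c_1$, $c_2(\log 2d)^{16}L^{16}$ and $c_3G^{16}$ terms respectively.

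First I would remove the dependence on the random minimizer $\hat\theta$. Although the three thresholds involve $\hat\theta$, $R_-$ and $\delta_\rho$, compactness of $\Theta$ and smoothness of $\mathcal{R}$ make the operator norm $\VERT D\mathcal{R}_{\theta^*}^{-1}\hat\theta\VERT$ (and the reciprocal norm entering $R_-$) bounded above and below by deterministic positive constants uniformly over $\hat\theta\in\Theta$; hence each threshold is bounded below by a fixed $t_0>0$ depending only on $\rho,\mu_-,\mu_+,K$ and the geometry. Using $\VERT A^*BA\VERT\le\VERT A\VERT^2\VERT B\VERT$ with $A=D\mathcal{R}_{\theta^*}^{-1}\hat\theta$ then shows each bad event is contained in an event of the form $\{Z>t_1\}$ with $t_1>0$ deterministic and $Z$ the norm of an average of i.i.d.\ centered terms: $\nabla\mathcal{L}_j(\theta^*)=\frac1n\sum_{i=1}^n\nabla\mathcal{L}(\theta^*,x_{ij})$ with $\mathbb{E}\nabla\mathcal{L}(\theta^*,x)=\nabla\mathcal{L}^*(\theta^*)=0$ (the gradient of $\mathcal{L}^*$ vanishing at its minimizer $\theta^*$) and $\mathbb{E}\|\nabla\mathcal{L}(\theta^*,x)\|^{16}\le G^{16}$; $\nabla^2\mathcal{L}_j(\theta^*)-I(\theta^*)=\frac1n\sum_{i=1}^n(\nabla^2\mathcal{L}(\theta^*,x_{ij})-I(\theta^*))$ with centered terms and $\mathbb{E}\VERT\nabla^2\mathcal{L}(\theta^*,x)-I(\theta^*)\VERT^{16}\le L^{16}$; and $K_j$, a (normalized) sum of i.i.d.\ copies of $\mathcal{K}$ with $2K$ above its mean.

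Then I would apply a Rosenthal / Marcinkiewicz--Zygmund inequality to each of these averages. For the vector and scalar cases this gives $\mathbb{E}\|\nabla\mathcal{L}_j(\theta^*)\|^{16}\le CG^{16}n^{-8}$ and the analogous bound for $K_j$; for the self-adjoint operator case a matrix Rosenthal (noncommutative Khintchine) inequality gives $\mathbb{E}\VERT\nabla^2\mathcal{L}_j(\theta^*)-I(\theta^*)\VERT^{16}\le C(\log 2d)^{16}L^{16}n^{-8}$, the $\log 2d$ being the dimensional cost of matrix concentration. A single application of Markov's inequality at the $16$th moment against the fixed thresholds $t_1$ converts each of these into a bound of order $n^{-8}$ with constants depending only on $\rho,\mu_-,\mu_+,R_-,K$ and the universal Rosenthal constant. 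Summing the three, multiplying by $m$, and taking $c_1,c_2,c_3$ to be the resulting maxima would complete the proof; one then checks directly that none of them depends on $(n,m,N,d,G,L)$.

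\textbf{Main obstacle.} The only genuinely non-routine point is the dependence of $\hat\theta$ on the very subsample over which we are averaging: since $\hat\theta$ cannot be conditioned out, the estimates for $\mathcal{E}_j$ do not follow from plain i.i.d.\ concentration, and the clean way around it is the uniform operator-norm bound from compactness of $\Theta$ and smoothness of the retraction, which simultaneously decouples the random conjugating maps and bounds the random thresholds $R_-,\delta_\rho$ away from zero. A secondary point is invoking the matrix moment inequality in exactly the form that lets the dimension enter only through $(\log 2d)^{16}$ while keeping the exponent of $n$ equal to $8$, half the moment order.
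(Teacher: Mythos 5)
Your proposal follows essentially the same route as the paper: a union bound over the $m$ machines and the three sub-events of each $\mathcal{E}_j$, reduction to $16$th-moment bounds for the i.i.d.\ averages $\nabla\mathcal{L}_j(\theta^*)$, $\nabla^2\mathcal{L}_j(\theta^*)-I(\theta^*)$ and $K_j$ via Rosenthal-type inequalities (the paper imports these as Lemma~6 of Zhang et al.\ [2013], with the same $(\log 2d)$ dimensional factor in the matrix case), followed by Markov's inequality to obtain the $n^{-8}$ rate. Your additional remark about uniformly controlling the random conjugating maps $D\mathcal{R}_{\theta^*}^{-1}\hat\theta$ and the random thresholds is a point the paper passes over in silence, but it does not change the argument's structure.
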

  \begin{proof}
  Apply Lemma 6 in Zhang et al [2013] \cite{JMLR:v14:zhang13b}, which was also generalized for the manifold.
  \begin{lemma}
  Under the event $\mathcal{E}_j$ we have
  \begin{equation*}
      \parallel\theta_j-\theta^*\parallel\leq\frac{2\parallel\nabla\mathcal{L}_j(\theta^*)\parallel}{(1-\rho)\mu_{-}R_{-}R_{+}}, \qquad (1-\rho)\mu_{-}R_{-}R_{+}{\rm id}_{\theta}\preceq\nabla^2\mathcal{L}_j(\theta),
  \end{equation*}
  where $R_{+}=\frac{1}{\big\VERT(D\mathcal{R}_{\theta}^{-1}\hat{\theta})(D\mathcal{R}_{\theta}^{-1}\hat{\theta})^*\big\VERT}.$
  \end{lemma}
  \begin{proof}
  We first prove that $\mathcal{L}_j$ is $(1-\rho)\mu_{-}R_{-}R_{+}$-strongly convex over the ball $U_{\delta_{\rho}}=\{\theta\in\Theta:d_{\mathcal{R}}(\theta, \theta^*)<\delta_{\rho}\}.$ Indeed
  \begin{multline*}
    \VERT(D\mathcal{R}_{\theta}^{-1}\hat{\theta})^*\nabla^2\mathcal{L}_j(\theta)(D\mathcal{R}_{\theta}^{-1}\hat{\theta})-(D\mathcal{R}_{\theta^*}^{-1}\hat{\theta})^*\nabla^2\mathcal{L}^*(\theta^*)(D\mathcal{R}_{\theta^*}^{-1}\hat{\theta})\VERT\leq \\
    \VERT(D\mathcal{R}_{\theta}^{-1}\hat{\theta})^*\nabla^2\mathcal{L}_j(\theta)(D\mathcal{R}_{\theta}^{-1}\hat{\theta})-(D\mathcal{R}_{\theta^*}^{-1}\hat{\theta})^*\nabla^2\mathcal{L}_j(\theta^*)(D\mathcal{R}_{\theta^*}^{-1}\hat{\theta})\VERT + \\
    \qquad \VERT(D\mathcal{R}_{\theta^*}^{-1}\hat{\theta})^*\nabla^2\mathcal{L}_j(\theta^*)(D\mathcal{R}_{\theta^*}^{-1}\hat{\theta})-(D\mathcal{R}_{\theta^*}^{-1}\hat{\theta})^*I(\theta^*)(D\mathcal{R}_{\theta^*}^{-1}\hat{\theta})\VERT\leq \\
      K_jd_{\mathcal{R}}(\theta, \theta^*)+\frac{\rho\mu_{-}R_{-}}{2}\leq
      2Kd_{\mathcal{R}}(\theta, \theta^*)+\frac{\rho\mu_{-}R_{-}}{2}
  \end{multline*}
  Lets consider
  \begin{multline*}
     (D\mathcal{R}_{\theta^*}^{-1}\hat{\theta})^*I(\theta^*)(D\mathcal{R}_{\theta^*}^{-1}\hat{\theta})- (D\mathcal{R}_{\theta}^{-1}\hat{\theta})^*\nabla^2\mathcal{L}_j(\theta)(D\mathcal{R}_{\theta}^{-1}\hat{\theta})\preceq\\
      \VERT(D\mathcal{R}_{\theta}^{-1}\hat{\theta})^*\nabla^2\mathcal{L}_j(\theta)(D\mathcal{R}_{\theta}^{-1}\hat{\theta})-(D\mathcal{R}_{\theta^*}^{-1}\hat{\theta})^*I(\theta^*)(D\mathcal{R}_{\theta^*}^{-1}\hat{\theta})\VERT{\rm id}_{\hat{\theta}}\preceq\\
      \Big(2Kd_{\mathcal{R}}(\theta, \theta^*)+\frac{\rho\mu_{-}R_{+}}{2}\Big){\rm id}_{\hat{\theta}}
  \end{multline*}
  From the first regularity condition we have
  \begin{multline*}
      \mu_{-}(D\mathcal{R}_{\theta^*}^{-1}\hat{\theta})^*(D\mathcal{R}_{\theta^*}^{-1}\hat{\theta})-(D\mathcal{R}_{\theta}^{-1}\hat{\theta})^*\nabla^2\mathcal{L}_j(\theta)(D\mathcal{R}_{\theta}^{-1}\hat{\theta})\preceq \\
       (D\mathcal{R}_{\theta^*}^{-1}\hat{\theta})^*I(\theta^*)(D\mathcal{R}_{\theta^*}^{-1}\hat{\theta})- (D\mathcal{R}_{\theta}^{-1}\hat{\theta})^*\nabla^2\mathcal{L}_j(\theta)(D\mathcal{R}_{\theta}^{-1}\hat{\theta}),
  \end{multline*}
  so
  \begin{multline*}
      \mu_{-}R_{-}{\rm id}_{\hat{\theta}}-(D\mathcal{R}_{\theta}^{-1}\hat{\theta})^*\nabla^2\mathcal{L}_j(\theta)(D\mathcal{R}_{\theta}^{-1}\hat{\theta})\preceq \\
       (D\mathcal{R}_{\theta^*}^{-1}\hat{\theta})^*I(\theta^*)(D\mathcal{R}_{\theta^*}^{-1}\hat{\theta})- (D\mathcal{R}_{\theta}^{-1}\hat{\theta})^*\nabla^2\mathcal{L}_j(\theta)(D\mathcal{R}_{\theta}^{-1}\hat{\theta}).
  \end{multline*}
  Then by our choice of $\delta_{\rho}\leq\frac{\rho\mu_{-}R_{-}}{4K}$ 
  \begin{multline*}
     (D\mathcal{R}_{\theta^*}^{-1}\hat{\theta})^*I(\theta^*)(D\mathcal{R}_{\theta^*}^{-1}\hat{\theta})- (D\mathcal{R}_{\theta}^{-1}\hat{\theta})^*\nabla^2\mathcal{L}_j(\theta)(D\mathcal{R}_{\theta}^{-1}\hat{\theta})\preceq\\
     \Big(2Kd_{\mathcal{R}}(\theta, \theta^*)+\frac{\rho\mu_{-}R_{-}}{2}\Big){\rm id}_{\hat{\theta}}\preceq\rho\mu_{-}R_{-}{\rm id}_{\hat{\theta}}.
  \end{multline*}
  So finally we have
  \begin{equation*}
     \mu_{-}R_{-}{\rm id}_{\hat{\theta}}-(D\mathcal{R}_{\theta}^{-1}\hat{\theta})^*\nabla^2\mathcal{L}_j(\theta)(D\mathcal{R}_{\theta}^{-1}\hat{\theta})\preceq \rho\mu_{-}R_{-}{\rm id}_{\hat{\theta}}
  \end{equation*}
  and so
  \begin{equation*}
      (D\mathcal{R}_{\theta}^{-1}\hat{\theta})^*\nabla^2\mathcal{L}_j(\theta)(D\mathcal{R}_{\theta}^{-1}\hat{\theta})\succeq(1-\rho)\mu_{-}R_{-}{\rm id}_{\hat{\theta}},
  \end{equation*}
  and so
  \begin{multline*}
      \nabla^2\mathcal{L}_j(\theta)\succeq(1-\rho)\mu_{-}R_{-}\big((D\mathcal{R}_{\theta}^{-1}\hat{\theta})^*\big)^{-1}(D\mathcal{R}_{\theta}^{-1}\hat{\theta})^{-1}\succeq (1-\rho)\mu_{-}R_{-}R_{+}{\rm id}_{\theta}.
  \end{multline*}
  It means that $\mathcal{L}_j$ is $(1-\rho)\mu_{-}R_{-}R_{+}$-strongly convex.

  Thus by the definition of strong convexity
  \begin{equation*}
      \mathcal{L}_j(\theta')\geq\mathcal{L}_j(\theta^*)+\langle\nabla\mathcal{L}_1(\theta^*), \mathcal{R}_{\theta^*}^{-1}\theta'\rangle+\frac{(1-\rho)\mu_{-}R_{-}R_{+}}{2}d_{\mathcal{R}}(\theta', \theta^*)^2,
  \end{equation*}
  then
  \begin{align*}
      \begin{split}
        d_{\mathcal{R}}(\theta_j, \theta^*)^2\leq&\frac{2}{(1-\rho)\mu_{-}R_{-}R_{+}}\big(\mathcal{L}_j(\theta_j)-\mathcal{L}_j(\theta^*)-\langle\nabla\mathcal{L}_j(\theta^*), \mathcal{R}_{\theta^*}^{-1}\theta_j\rangle\big)\leq\\
        &\frac{2}{(1-\rho)\mu_{-}R_{-}R_{+}}\big(\mathcal{L}_j(\theta_j)-\mathcal{L}_j(\theta^*)+\parallel\nabla\mathcal{L}_j(\theta^*)\parallel d_{\mathcal{R}}(\theta_j, \theta^*)\big).
      \end{split}
  \end{align*}
  Dividing each side by $d_{\mathcal{R}}(\theta', \theta^*)$
  \begin{equation*}
     d_{\mathcal{R}}(\theta_j, \theta^*)\leq \frac{2}{(1-\rho)\mu_{-}R_{-}R_{+}}\Big(\frac{\mathcal{L}_j(\theta_j)-\mathcal{L}_j(\theta^*)}{d_{\mathcal{R}}(\theta_j, \theta^*)}+\parallel\nabla\mathcal{L}_j(\theta^*)\parallel\Big).
  \end{equation*}
  Hence
  \begin{equation*}
     d_{\mathcal{R}}(\theta_j, \theta^*)< \frac{2\parallel\nabla\mathcal{L}_j(\theta^*)\parallel}{(1-\rho)\mu_{-}R_{-}R_{+}}.
  \end{equation*}
    \end{proof}
    Then from the previous lemma 
    \begin{equation*}
        d_{\mathcal{R}}(\hat{\theta}, \theta^*)\leq\frac{2\parallel\nabla\mathcal{L}_N(\theta^*)\parallel}{(1-\rho)\mu_{-}R_{-}R_{+}}.
    \end{equation*}
\begin{lemma}
Under standard regularity conditions 2 and 4, there exist universal constants $c, c'$ such that for $\nu\in\{1, ..., 8\},$
\begin{align*}
    \begin{split}
        \mathbb{E}\parallel\nabla\mathcal{L}_j(\theta^*)\parallel^{2\nu}&\leq\frac{cG^{2\nu}}{n^{nu}},\\
        \mathbb{E}\big\VERT\nabla^2\mathcal{L}_j(\theta^*)-I(\theta^*)\big\VERT^{2\nu}&\leq\frac{c'(\log2d)^{\nu}L^{\nu}}{n^{\nu}}.
    \end{split}
\end{align*}
\end{lemma}
Now we apply Markov's inequality, Jensen's inequality and the union bound to obtain that there exist constants $c_1, c_2, c_3$ independent of $(n, m, N, d, G, L)$ such that
 \begin{equation*}
  \mathbb{P}\bigg(\bigcup_{j=1}^m\mathcal{E}_j^c\bigg)\leq\big(c_1+c_2(\log2d)^{16}L^{16}+c_3G^{16}\big)\frac{m}{n^8}.
  \end{equation*}
  \end{proof}
\begin{lemma}
Suppose that standard regularity conditions hold. Then under event $\mathcal{E}_0\cap\mathcal{E}_s$ we have
\begin{equation*}
    d_{\mathcal{R}}(\tilde{\theta}, \hat{\theta})\leq\frac{2\parallel\nabla\tilde{\mathcal{L}}_s(\hat{\theta})\parallel}{(1-\rho)\mu_{-}R_{-}R_{+}}.
\end{equation*}
\end{lemma}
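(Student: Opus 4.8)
The plan is to replay, essentially verbatim, the argument behind the strong-convexity lemma for $\mathcal{L}_j$, now with $\mathcal{L}_j$ replaced by the surrogate $\tilde{\mathcal{L}}_s$ and with the two inequalities imposed on $\tilde{\mathcal{L}}_s$ in the hypothesis of Theorem~2 taking over the role played there by the corresponding bounds inside $\mathcal{E}_j$. So the first step is to show that $\tilde{\mathcal{L}}_s$ is $(1-\rho)\mu_{-}R_{-}R_{+}$-strongly convex on $U_{\delta_\rho}=\{\theta\in\Theta:d_{\mathcal{R}}(\theta,\theta^*)<\delta_\rho\}$. Starting from the triangle-inequality split
\[
\big\VERT (D\mathcal{R}_{\theta}^{-1}\hat{\theta})^*\nabla^2\tilde{\mathcal{L}}_s(\theta)(D\mathcal{R}_{\theta}^{-1}\hat{\theta})-(D\mathcal{R}_{\theta^*}^{-1}\hat{\theta})^*I(\theta^*)(D\mathcal{R}_{\theta^*}^{-1}\hat{\theta})\big\VERT\le(\ast)+(\ast\ast),
\]
I would bound $(\ast)$, the difference of the transported Hessians of $\tilde{\mathcal{L}}_s$ at $\theta$ and at $\theta^*$, by $K_s\,d_{\mathcal{R}}(\theta,\theta^*)\le 2K\,d_{\mathcal{R}}(\theta,\theta^*)$ via the second hypothesis of Theorem~2 together with $K_s\le 2K$ on $\mathcal{E}_s$, and bound $(\ast\ast)$, the deviation of the transported Hessian of $\tilde{\mathcal{L}}_s$ at $\theta^*$ from $I(\theta^*)$, by $\tfrac{\rho\mu_-R_-}{4}$ via the first hypothesis of Theorem~2. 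Feeding this into regularity conditions~1 and~2 and using $\delta_\rho\le\frac{\rho\mu_-R_-}{4K}$, the same chain of $\preceq$-estimates as in the $\mathcal{L}_j$-lemma yields $(D\mathcal{R}_{\theta}^{-1}\hat{\theta})^*\nabla^2\tilde{\mathcal{L}}_s(\theta)(D\mathcal{R}_{\theta}^{-1}\hat{\theta})\succeq(1-\rho)\mu_-R_-\,{\rm id}_{\hat\theta}$, hence $\nabla^2\tilde{\mathcal{L}}_s(\theta)\succeq(1-\rho)\mu_-R_-R_+\,{\rm id}_\theta$ throughout $U_{\delta_\rho}$. The only new feature compared with $\mathcal{L}_j$ is that $\tilde{\mathcal{L}}_s$ carries the extra term $-\langle\mathcal{R}_{\bar\theta}^{-1}(\cdot),\nabla\mathcal{L}_s(\bar\theta)-\nabla\mathcal{L}_N(\bar\theta)\rangle$, whose Hessian does not vanish for a nonlinear inverse retraction; but the hypotheses of Theorem~2 are stated directly in terms of $\nabla^2\tilde{\mathcal{L}}_s$, so no separate bound is required.

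The second step is to place both $\hat\theta$ and the surrogate minimizer $\tilde\theta$ inside $U_{\delta_\rho}$, so that the strong-convexity inequality may be applied along the retraction curve joining them (which stays in $\Theta$ by $\mathcal{R}$-convexity, condition~1). For $\hat\theta$ this is immediate, since $\mathcal{E}_0$ forces $d_{\mathcal{R}}(\hat\theta,\theta^*)\le\frac{(1-\rho)\mu_-\delta_\rho}{8\mu_+}<\delta_\rho$. For $\tilde\theta$ one needs a localization argument: from $\tilde{\mathcal{L}}_s(\tilde\theta)\le\tilde{\mathcal{L}}_s(\theta^*)$, expanding $\tilde{\mathcal{L}}_s=\mathcal{L}_s-\langle\mathcal{R}_{\bar\theta}^{-1}(\cdot),\nabla\mathcal{L}_s(\bar\theta)-\nabla\mathcal{L}_N(\bar\theta)\rangle$, and invoking the gradient bound $\parallel\nabla\mathcal{L}_s(\theta^*)\parallel\le\frac{(1-\rho)\mu_-\delta_\rho}{4}$ from $\mathcal{E}_s$, the averaged bound on $\parallel\nabla\mathcal{L}_N(\theta^*)\parallel$ obtained from the $\mathcal{E}_j$'s, and the separation-of-minima regularity condition~3, one rules out $d_{\mathcal{R}}(\tilde\theta,\theta^*)\ge\delta_\rho$. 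I expect this localization — proving that the minimizer of the surrogate cannot escape the basin of strong convexity — to be the main obstacle, since it is where the genuinely global, non-convex manifold structure enters and where condition~3 and the various components of $\mathcal{E}_0\cap\mathcal{E}_s$ must be combined carefully.

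With strong convexity of $\tilde{\mathcal{L}}_s$ in force on a region containing $\hat\theta$, $\tilde\theta$ and the retraction curve between them, the conclusion is a two-line computation as in the $\mathcal{L}_j$-lemma. Apply the strong-convexity inequality with base point $\hat\theta$:
\[
\tilde{\mathcal{L}}_s(\tilde\theta)\ge\tilde{\mathcal{L}}_s(\hat\theta)+\langle\nabla\tilde{\mathcal{L}}_s(\hat\theta),\mathcal{R}_{\hat\theta}^{-1}\tilde\theta\rangle+\frac{(1-\rho)\mu_-R_-R_+}{2}\,d_{\mathcal{R}}(\tilde\theta,\hat\theta)^2.
\]
Since $\tilde\theta$ minimizes $\tilde{\mathcal{L}}_s$ we have $\tilde{\mathcal{L}}_s(\tilde\theta)-\tilde{\mathcal{L}}_s(\hat\theta)\le 0$; rearranging and using Cauchy--Schwarz with $\parallel\mathcal{R}_{\hat\theta}^{-1}\tilde\theta\parallel\le d_{\mathcal{R}}(\tilde\theta,\hat\theta)$ (as in the earlier lemma) gives
\[
\frac{(1-\rho)\mu_-R_-R_+}{2}\,d_{\mathcal{R}}(\tilde\theta,\hat\theta)^2\le\parallel\nabla\tilde{\mathcal{L}}_s(\hat\theta)\parallel\,d_{\mathcal{R}}(\tilde\theta,\hat\theta),
\]
and dividing by $d_{\mathcal{R}}(\tilde\theta,\hat\theta)$ (the bound being trivial when it vanishes) yields $d_{\mathcal{R}}(\tilde\theta,\hat\theta)\le\frac{2\parallel\nabla\tilde{\mathcal{L}}_s(\hat\theta)\parallel}{(1-\rho)\mu_-R_-R_+}$, as claimed.
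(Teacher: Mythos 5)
Your proposal is correct and matches the paper's intended argument: the paper states this lemma without an explicit proof, but it is the direct analogue of the earlier strong-convexity lemma for $\mathcal{L}_j$ (Lemma 2), with $\tilde{\mathcal{L}}_s$ in place of $\mathcal{L}_j$, $\hat{\theta}$ in place of $\theta^*$, and the two extra hypotheses of Theorem 2 on $\nabla^2\tilde{\mathcal{L}}_s$ playing the role that the Hessian bounds inside $\mathcal{E}_j$ played there, followed by the same minimality-plus-Cauchy--Schwarz step. Your added localization of $\tilde{\theta}$ in $U_{\delta_\rho}$ fills in a point the paper silently skips but does not change the approach.
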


A simple calculation yields
\begin{multline*}
    \nabla\tilde{\mathcal{L}}_s(\hat{\theta})=\nabla\mathcal{L}_s(\hat{\theta}) -(D\mathcal{R}_{\bar{\theta}}^{-1}\hat{\theta})^*(\nabla\mathcal{L}_s(\bar{\theta})-\nabla\mathcal{L}_N(\bar{\theta}))=\\
    \nabla\mathcal{L}_s(\hat{\theta}) -(D\mathcal{R}_{\bar{\theta}}^{-1}\hat{\theta})^*\nabla\mathcal{L}_s(\bar{\theta})-\big(\nabla\mathcal{L}_N(\hat{\theta}) -(D\mathcal{R}_{\bar{\theta}}^{-1}\hat{\theta})^*\nabla\mathcal{L}_N(\bar{\theta})\big).
\end{multline*}
Actually for any $\xi\in T_{\hat{\theta}}M$ we have the vector field $\xi(\theta)=D\big(\mathcal{R}_{\theta}^{-1}\hat{\theta}\big)\xi,$ then
\begin{equation*}
    \xi(\mathcal{L}_s)(\theta)=\langle\nabla\mathcal{L}_s(\theta), \xi(\theta)\rangle=\langle D\big(\mathcal{R}_{\theta}^{-1}\hat{\theta}\big)^*\nabla\mathcal{L}_s(\theta), \xi\rangle.
\end{equation*}
Also along the retraction $\mathcal{R}_{\hat{\theta}}t\mathcal{R}_{\hat{\theta}}^{-1}\bar{\theta}$ we have the tangent vector field \begin{equation*}
\eta(t)=D\big(\mathcal{R}_{\hat{\theta}}^{-1}(\mathcal{R}_{\hat{\theta}}t\mathcal{R}_{\hat{\theta}}^{-1}\bar{\theta})\big)^{-1}\mathcal{R}_{\hat{\theta}}^{-1}\bar{\theta}.
\end{equation*}
So along the curve  $\mathcal{R}_{\hat{\theta}}t\mathcal{R}_{\hat{\theta}}^{-1}\bar{\theta}$
\begin{multline*}
  \eta\xi(\mathcal{L}_s)(t)=
  \langle\nabla^2\mathcal{L}_s(\mathcal{R}_{\hat{\theta}}t\mathcal{R}_{\hat{\theta}}^{-1}\bar{\theta})\eta(t), \xi(\mathcal{R}_{\hat{\theta}}t\mathcal{R}_{\hat{\theta}}^{-1}\bar{\theta})\rangle+
  \langle\nabla\mathcal{L}_s(\mathcal{R}_{\hat{\theta}}t\mathcal{R}_{\hat{\theta}}^{-1}\bar{\theta}), \nabla_{\eta(t)}\xi(\mathcal{R}_{\hat{\theta}}t\mathcal{R}_{\hat{\theta}}^{-1}\bar{\theta})\rangle=\\
  \langle D\big(\mathcal{R}_{\theta}^{-1}\hat{\theta}\big)^*\nabla^2\mathcal{L}_s(\mathcal{R}_{\hat{\theta}}t\mathcal{R}_{\hat{\theta}}^{-1}\bar{\theta})D\big(\mathcal{R}_{\hat{\theta}}^{-1}(\mathcal{R}_{\hat{\theta}}t\mathcal{R}_{\hat{\theta}}^{-1}\bar{\theta})\big)^{-1}\mathcal{R}_{\hat{\theta}}^{-1}\bar{\theta}+\nabla D(t, \bar{\theta}, \hat{\theta})\nabla\mathcal{L}_s(\mathcal{R}_{\hat{\theta}}t\mathcal{R}_{\hat{\theta}}^{-1}\bar{\theta}), \xi\rangle.
\end{multline*}
Thus
\begin{multline*}
    \xi(\mathcal{L}_s)(\bar{\theta})-\xi(\mathcal{L}_s)(\hat{\theta})=\langle D\big(\mathcal{R}_{\bar{\theta}}^{-1}\hat{\theta}\big)^*\nabla\mathcal{L}_s(\bar{\theta})-\nabla\mathcal{L}_s(\hat{\theta}), \xi\rangle=\\
    \bigg\langle\int_0^1\Big( D\big(\mathcal{R}_{\theta}^{-1}\hat{\theta}\big)^*\nabla^2\mathcal{L}_s(\mathcal{R}_{\hat{\theta}}t\mathcal{R}_{\hat{\theta}}^{-1}\bar{\theta})D\big(\mathcal{R}_{\hat{\theta}}^{-1}(\mathcal{R}_{\hat{\theta}}t\mathcal{R}_{\hat{\theta}}^{-1}\bar{\theta})\big)^{-1}\mathcal{R}_{\hat{\theta}}^{-1}\bar{\theta}+\nabla D(t, \bar{\theta}, \hat{\theta})\nabla\mathcal{L}_s(\mathcal{R}_{\hat{\theta}}t\mathcal{R}_{\hat{\theta}}^{-1}\bar{\theta})\Big)dt,\xi\bigg\rangle.
\end{multline*}
Therefore we have
\begin{equation*}
  \nabla\mathcal{L}_s(\hat{\theta}) -(D\mathcal{R}_{\bar{\theta}}\hat{\theta})^*\nabla\mathcal{L}_s(\bar{\theta})=-H_s\mathcal{R}_{\hat{\theta}}^{-1}\bar{\theta}-\int_0^1\nabla D(t, \bar{\theta}, \hat{\theta})^*\nabla\mathcal{L}_s(\mathcal{R}_{\hat{\theta}}t\mathcal{R}_{\hat{\theta}}^{-1}\bar{\theta}),  
\end{equation*}
where
\begin{equation*}
    H_s=\int_0^1(D\mathcal{R}_{\mathcal{R}_{\hat{\theta}}t\mathcal{R}_{\hat{\theta}}^{-1}\bar{\theta}}\hat{\theta})^*\nabla^2\mathcal{L}_s(\mathcal{R}_{\bar{\theta}}t\mathcal{R}_{\bar{\theta}}^{-1}\hat{\theta})\big(D\mathcal{R}_{\hat{\theta}}^{-1}(\mathcal{R}_{\hat{\theta}}t\mathcal{R}_{\hat{\theta}}^{-1}\bar{\theta})\big)^{-1}dt
\end{equation*}
satisfies
\begin{equation*}
    \big\VERT H_s-(D\mathcal{R}_{\theta^*}\hat{\theta})^*\nabla^2\mathcal{L}_s(\theta^*)(D\mathcal{R}_{\hat{\theta}}\theta^*)^{-1}\big\VERT\leq2K\big(d_{\mathcal{R}}(\bar{\theta},\hat{\theta})+d_{\mathcal{R}}(\hat{\theta}, \theta^*)\big).
\end{equation*}
Therefore
\begin{align*}
\begin{split}
    \VERT\nabla\tilde{\mathcal{L}}_s(\hat{\theta})\VERT\leq
    &4Kd_{\mathcal{R}}(\hat{\theta}, \bar{\theta})+\big\VERT H_s-(D\mathcal{R}_{\theta^*}\hat{\theta})^*\nabla^2\mathcal{L}_s(\theta^*)(D\mathcal{R}_{\hat{\theta}}\theta^*)^{-1}\big\VERT d_{\mathcal{R}}(\hat{\theta}, 
    \bar{\theta})+\\
    &\quad
    \Big(\frac{1}{m}\sum_{j=1}^m
    \big\VERT H_j-(D\mathcal{R}_{\theta^*}\hat{\theta})^*\nabla^2\mathcal{L}_j(\theta^*)(D\mathcal{R}_{\hat{\theta}}\theta^*)^{-1}\big\VERT\Big)d_{\mathcal{R}}(\hat{\theta}, \bar{\theta})+\\
    &\qquad \ \big\VERT(D\mathcal{R}_{\theta^*}\hat{\theta})^*\big(\nabla^2\mathcal{L}_s(\theta^*)-\nabla^2\mathcal{L}_N(\theta^*)\big)(D\mathcal{R}_{\hat{\theta}}\theta^*)^{-1}\big\VERT d_{\mathcal{R}}(\hat{\theta}, \bar{\theta})\leq \\
    &\bigg(4K+\big\VERT H_s-(D\mathcal{R}_{\theta^*}\hat{\theta})^*\nabla^2\mathcal{L}_s(\theta^*)(D\mathcal{R}_{\hat{\theta}}\theta^*)^{-1}\big\VERT+\\
    &\quad \Big(\frac{1}{m}\sum_{j=1}^m
    \big\VERT H_j-(D\mathcal{R}_{\theta^*}\hat{\theta})^*\nabla^2\mathcal{L}_j(\theta^*)(D\mathcal{R}_{\hat{\theta}}\theta^*)^{-1}\big\VERT\Big)+\\
    &\qquad \ \big\VERT(D\mathcal{R}_{\theta^*}\hat{\theta})^*\big(\nabla^2\mathcal{L}_s(\theta^*)-\nabla^2\mathcal{L}_N(\theta^*)\big)(D\mathcal{R}_{\hat{\theta}}\theta^*)^{-1}\big\VERT\bigg)d_{\mathcal{R}}(\hat{\theta}, \bar{\theta})\leq\\
    &\bigg(4K+4Kd_{\mathcal{R}}(\theta_j, \hat{\theta})+4Kd_{\mathcal{R}}(\hat{\theta}, \bar{\theta})+\\
    &\quad \big\VERT(D\mathcal{R}_{\theta^*}\hat{\theta})^*\big(\nabla^2\mathcal{L}_s(\theta^*)-\nabla^2\mathcal{L}_N(\theta^*)\big)(D\mathcal{R}_{\hat{\theta}}\theta^*)^{-1}\big\VERT\bigg)d_{\mathcal{R}}(\hat{\theta}, \bar{\theta}).
\end{split}
\end{align*}
Hence
\begin{align*}
\begin{split}
    \big\VERT\nabla\tilde{\mathcal{L}}_s(\hat{\theta})\big\VERT\leq
    &\bigg(4K+4Kd_{\mathcal{R}}(\bar{\theta}, \hat{\theta})+4Kd_{\mathcal{R}}(\hat{\theta}, \bar{\theta})+\\
    &\quad \big\VERT(D\mathcal{R}_{\theta^*}\hat{\theta})^*\big(\nabla^2\mathcal{L}_s(\theta^*)-\nabla^2\mathcal{L}_N(\theta^*)\big)(D\mathcal{R}_{\hat{\theta}}\theta^*)^{-1}\big\VERT\bigg)d_{\mathcal{R}}(\hat{\theta}, \bar{\theta}).
\end{split}
\end{align*}

Then comes lemma 4.

\end{document}